\newtheorem{theorem}{Theorem}
\theoremstyle{definition}
\DeclareMathOperator*{\argmax}{argmax}
\title{Novelty Search in Representational Space for Sample Efficient Exploration}
\author{Ruo Yu Tao\textsuperscript{1, 2, *}, Vincent François-Lavet\textsuperscript{1, 2}, Joelle Pineau\textsuperscript{1, 2} \\
\textsuperscript{1} McGill University\\
\textsuperscript{2} Mila, Quebec Artificial Intelligence Institute\\
\textsuperscript{*} \texttt{ruo.tao@mail.mcgill.ca}
}
\begin{document}

\maketitle

\begin{abstract}
We present a new approach for efficient exploration which leverages a low-dimensional encoding of the environment learned with a combination of model-based and model-free objectives. 
Our approach uses intrinsic rewards that are based on the distance of nearest neighbors in the low dimensional representational space to gauge novelty.
We then leverage these intrinsic rewards for sample-efficient exploration with planning routines in representational space for hard exploration tasks with sparse rewards.
One key element of our approach is the use of information theoretic principles to shape our representations in a way so that our novelty reward goes beyond pixel similarity.
We test our approach on a number of maze tasks, as well as a control problem and show that our exploration approach is more sample-efficient compared to strong baselines. 
\end{abstract}

\section{Introduction}

In order to solve a task efficiently in reinforcement learning (RL), one of the main challenges is to gather informative experiences via an efficient exploration of the state space. 
A common approach to exploration is to leverage intrinsic rewards correlated with some metric or score for novelty \citep{schmidhuber2010formal, stadie2015incentivizing, houthooft2016vime}. With intrinsic rewards, an agent can be incentivized to efficiently explore its state space.
A direct approach to calculating these novelty scores is to derive a reward based on the observations, such as a count-based reward \citep{bellemare2016unifying, ostrovski2017count} or a prediction-error based reward \citep{Burda2018Distillation}.
However, an issue occurs when measuring novelty directly from the raw observations, as some information in pixel space (such as randomness or backgrounds) may be irrelevant. In this case, if an agent wants to efficiently explore its state space it should only focus on meaningful and novel information. 

In this work, we propose a method of sample-efficient exploration by leveraging intrinsic rewards in a meaningful latent state space. 
To build a meaningful state abstraction, we view Model-based RL (MBRL) from an information theoretic perspective - we optimize our dynamics learning through the Information Bottleneck \citep{Tishby2000IB} principle. We also combine both model-based and model-free components through a joint representation. This method encodes high-dimensional observations into lower-dimensional representations such that states that are close in \textit{dynamics} are brought close together in representation space \citep{Lavet2018CRAR}. 
We also add additional constraints to ensure that a measure of distance between abstract states is meaningful. We leverage these properties of our representation to formulate a novelty score based on Euclidean distance in low-dimensional representation space and we then use this score to generate intrinsic rewards that we can exploit for efficient exploration.

One important element of our exploration algorithm is that we take a Model Predictive Control (MPC) approach \citep{Garcia1989ModelPC} and perform actions only after our model is sufficiently accurate (and hence ensure an accurate novelty heuristic). Through this training scheme, our agent is also able to learn a meaningful representation of its state space in a sample-efficient manner. The code with all experiments is available \footnote{\url{https://github.com/taodav/nsrs}}.

\section{Problem setting}
An agent interacts with its environment over discrete timesteps, modeled as a Markov Decision Process (MDP), defined by the 6-tuple $(\mathcal{S}, \mathcal{S}_0, \mathcal{A}, \tau, \mathcal{R}, \mathcal{G})$ \citep{Puterman1994MDP}. 
In this setting, 
$\mathcal{S}$ is the state space, 
$\mathcal{S}_0$ is the initial state distribution, 
$\mathcal{A}$ is the discrete action space, 
$\tau: \mathcal{S} \times \mathcal{A} \rightarrow \mathcal{S}$ is the transition function that is assumed deterministic (with the possibility of extension to stochastic environments with generative methods), 
$R: \ \mathcal{S} \times \mathcal{A} \rightarrow \mathcal{R}$ is the reward function ($\mathcal{R}=[-1, 1]$), 
$\mathcal{G}: \mathcal{S} \times \mathcal{A} \rightarrow [0, 1)$ is the per timestep discount factor.
At timestep $t$ in state $s_t \in \mathcal{S}$, the agent chooses an action $a_t \in \mathcal{A}$ based on policy $\pi: \mathcal{S} \times \mathcal{A} \rightarrow [0, 1]$, such that $a_t \sim \pi(s_t, \cdot)$. After taking $a_t$, the agent is in state $s_{t+1} = \tau(s_t, a_t)$ and receives reward $r_t \sim R(s_t, a_t)$ and a discount factor $\gamma_t \sim \mathcal{G}(s_t, a_t)$. Over $n$ environment steps, we define the buffer of previously visited states as $\mathcal{B} = (s_1, \dots, s_{n})$, where $s_i \in \mathcal{S}$ $\forall i \in \mathbb{N}$.
In RL, the usual objective is to maximize the sum of expected future rewards $V_{\pi}(s) = \mathbb{E}_{\pi}\left[r_t + \sum_{i = 1}^{\infty}\left(\prod_{j = 0}^{i - 1}\gamma_{t+j}\right)r_{t + i} | s = s_t \right].$

To learn a policy $\pi$ that maximizes the expected return, an RL agent has to efficiently explore its environment (reach novel states in as few steps as possible). In this paper, we consider tasks with sparse rewards or even no rewards, and are interested in exploration strategies that require as few steps as possible to explore the state space.

\section{Abstract state representations}
\label{sec:abs_state_reps}
We focus on learning a lower-dimensional representation of state when our state (or observations in the partially observable case \citep{Kaelbling1998POMDP}) is high-dimensional \citep{dayan1993improving, tamar2016value, silver2016predictron, oh2017value, de2018integrating, ha2018world,Lavet2018CRAR, Hafner2018PlaNet,gelada2019deepmdp}.

\subsection{Information Bottleneck}
We first motivate our methods for model learning. To do so, we consider the \textit{Information Bottleneck} (IB) \citep{Tishby2000IB} principle.
Let $Z$ denote the original source message space and $\Tilde{Z}$ denote its compressed representation. As opposed to traditional lossless compression where we seek to find corresponding encodings $\Tilde{Z}$ that compresses all aspects of $Z$, in IB we seek to preserve only \textit{relevant} information in $\Tilde{Z}$ with regards to another relevance variable, $Y$. For example when looking to compress speech waveforms ($Z$) if our task at hand is speech recognition, then our relevance variable $Y$ would be a transcript of the speech. Our representation $\Tilde{Z}$ would only need to maximize relevant information about the transcript $Y$ instead of its full form including tone, pitch, background noise etc. We can formulate this objective by minimizing the following functional with respect to $p(\Tilde{z} \ | \ z)$:
\[
    \mathcal{L}(p(\Tilde{z} \ | \ z)) = I[Z ; \Tilde{Z}] - \beta I[\Tilde{Z} ; Y]
\]
where $I[\cdot ; \cdot]$ is the \textit{Mutual Information} (MI) between two random variables. $\beta$ is the Lagrange multiplier for the amount of information our encoding $\Tilde{Z}$ is allowed to quantify about $Y$. This corresponds to a trade-off between minimizing the encoding rate $I[Z ; \Tilde{Z}]$ and maximizing the mutual information between the encoding and our random variable $Y$.

We now apply this principle to representation learning of state in MBRL. If our source message space is our state $S'$ and our encoded message is $X'$, then to distill the most relevant information with regards to the dynamics of our environment one choice of relevance variable is $\{X, A\}$, i.e. our encoded state in the previous timestep together with the presence of an action. This gives us the functional
\begin{equation}
\label{eqn:ib-mbrl}
    \mathcal{L}(p(x' \ | \ s')) = I[S' ; X'] - \beta I[X' ; \{X, A\}].
\end{equation}
In our work, we look to find methods to minimize this functional for an encoding that maximizes the predictive ability of our dynamics model.

We first aim to minimize our encoding rate $I[S'; X']$. Since encoding rate is a measure of the amount of bits transmitted per message $S'$, representation dimension is analogous to number of bits per message. This principle of minimizing encoding rate guides our selection of representation dimension - for every environment, we try to choose the smallest representation dimension possible such that the representation can still encapsulate model dynamics as we understand them. For example, in a simple Gridworld example, we look to only encode agent position in the grid-world. 

Now let us consider the second term in Equation~\ref{eqn:ib-mbrl}. Our goal is to learn an optimally predictive model of our environment. To do so we first consider the MI between the random variable denoting our state representation $X$, in the presence of the random variable representing actions $A$ and the random variable denoting the state representation in the next timestep $X'$ \citep{still2009info}. Note that MI is a metric and is symmetric:
\begin{equation}
    I[\{X, A\} \ ; \ X'] = \mathbb{E}_{p(x', x, a)}\left[ \log \left( \frac{p(x' \ | \ x, a)}{p(x')} \right) \right] = H[X'] - H[X' \ | \ X, A]
\end{equation}
This quantity is a measure of our dynamics model's predictive ability. If we consider the two entropy terms (denoted $H[\cdot]$), we see that $H[X']$ constitutes the entropy of our state representation and $H[X' \ | \ X, A]$ as the entropy of the next state $X'$ given our current state $X$ and an action $A$. Recall that we are trying to minimize $I[X'; S']$ and maximize $I[X'; \{X, A\}]$ with respect to some encoding function $X = e(S)$. In the next section, we describe our approach for this encoding function as well as dynamics learning in MBRL.

\subsection{Encoding and dynamics learning}
For our purposes, we use a neural encoder $\hat{e}: \mathcal{S} \rightarrow \mathcal{X}$ parameterized by $\theta_{\hat{e}}$ to map our high-dimensional state space into lower-dimensional abstract representations, where $\mathcal{X} \subseteq \mathbb{R}^{n_{\mathcal{X}}}$.
The dynamics are learned via the following functions: 
a transition function $\hat{\tau}: \mathcal{X} \times A \rightarrow \mathcal{X}$ parameterized by $\theta_{\hat{\tau}}$, 
a reward function $\hat{r}:  \mathcal{X} \times A \rightarrow [-1, 1]$ parameterized by $\theta_{\hat{r}}$, and 
a per timestep discount factor function $\hat{\gamma}: \mathcal{X} \times A \rightarrow [0, 1)$ parameterized by $\theta_{\hat{\gamma}}$. This discount factor is only learned to predict terminal states, where $\gamma = 0$.

In order to leverage all past experiences, we use an off-policy learning algorithm that samples transition tuples $(s, a, r, \gamma, s')$ from a replay buffer. We first encode our current and next states with our encoder to get $x \leftarrow \hat{e}(s; \theta_{\hat{e}}), \ x' \leftarrow \hat{e}(s'; \theta_{\hat{e}})$.
The Q-function is learned using the DDQN algorithm \citep{Hasselt2015DDQN}, which uses the target:
\[
    Y = r + \gamma Q (\hat{e}(s'; \theta_{\hat{e}^-}), \argmax_{a' \in \mathcal{A}} Q(x', a'; \theta_{Q}); \theta_{Q^-}),
\]
where $\theta_{Q^-}$ and $\theta_{\hat{e}^-}$ are parameters of an earlier buffered Q-function (or our target Q-function) and encoder respectively. The agent then minimizes the following loss:
\[
    L_{Q}(\theta_{Q}) = (Q(x, a; \theta_{Q}) - Y)^2.
\]
We learn the dynamics of our environment through the following losses:
\[
    L_{R}(\theta_{\hat{e}}, \theta_{\hat{r}}) = \left|r - \hat{r}(x, a; \theta_{\hat{r}})\right|^2, \ L_{\mathcal{G}}(\theta_{\hat{e}},\theta_{\hat{\gamma}}) = |\gamma - \hat{\gamma}(x, a; \theta_{\hat{\gamma}})|^2
\]
and our transition loss 
\begin{equation}
\label{eqn:transition-loss}
    L_{\tau}(\theta_{\hat{e}}, \theta_{\hat{\tau}}) = ||[x + \hat{\tau}(x, a; \theta_{\hat{\tau}})] - x'||_2^2.
\end{equation}
Note that our transition function learns the difference (given an action) between previous state $x$ and current state $x'$.
By jointly learning the weights of the encoder and the different components, the abstract representation is shaped in a meaningful way according to the dynamics of the environment. In particular, by minimizing the loss given in Equation~\ref{eqn:transition-loss} with respect to the encoder parameters $\theta_{\hat{e}}$ (or $p(x \ | \ s)$), we minimize our entropy $H[X' | X, A]$.

In order to maximize the entropy of our learnt abstracted state representations $H[X']$, we minimize the expected pairwise Gaussian potential \citep{borodachov2019energy} between states:
\begin{equation}
\label{eqn:maxent}
    L_{d1}(\theta_{\hat{e}}) = \mathbb{E}_{s_1, s_2 \sim p(s)}\left[exp(-C_{d1} ||\hat{e}(s_1; \theta_{\hat{e}}) - \hat{e}(s_2; \theta_{\hat{e}})||_2^2)\right]
\end{equation}
with $C_{d1}$ as a hyperparameter. 
Losses in Equation~\ref{eqn:transition-loss} and Equation~\ref{eqn:maxent} are reminiscent of the model-based losses in \cite{Lavet2018CRAR} and correspond respectively to the \textit{alignment} and \textit{uniformity} contrastive loss formulation in \cite{wang2020understanding}, where alignment ensures that similar states are close together (in encoded representation space) and uniformity ensures that all states are spread uniformly throughout this low-dimensional representation space.

The losses $L_{\tau}(\theta_{\hat{e}})$ and $L_{d1}(\theta_{\hat{e}})$ maximizes the $I[\{X, A\}; X']$ term and selecting smaller dimension for our representation minimizes $I[X', S']$. Put together, our method is trying to minimize $\mathcal{L}(p(x' | s'))$ as per Equation~\ref{eqn:ib-mbrl}.

\subsection{Distance measures in representational space}
For practical purposes, since we are looking to use a distance metric within $\mathcal{X}$ to leverage as a score for novelty, we ensure well-defined distances between states by constraining the $\ell_2$ distance between two consecutive states:
\begin{equation}
\label{eqn:csc}
    L_{csc}(\theta_{\hat{e}}) = max(\lVert \hat{e}(s_1; \theta_e) - \hat{e}(s_2; \theta_e) \rVert_{2} - \omega, 0)
\end{equation}
where $L_{csc}$ is a soft constraint between consecutive states $s_1$ and $s_2$ that tends to enforce two consecutive encoded representations to be at a distance $\omega$ apart. We add $L_{csc}$ to ensure a well-defined $\ell_2$ distance between abstract states for use in our intrinsic reward calculation (a discussion of this loss is provided in Appendix~\ref{appendix:discu_csc}). We discuss how we use $\omega$ to evaluate model accuracy for our MPC updates in Appendix \ref{omega_accuracy}. Finally, we minimize the sum of all the aforementioned losses through gradient descent:
\begin{equation}
    \mathcal{L} = L_{R}(\theta_{\hat{e}},\theta_{\hat{r}}) +L_{\mathcal{G}}(\theta_{\hat{e}},\theta_{\hat{\gamma}}) +L_{\tau}(\theta_{\hat{e}}, \theta_{\hat{\tau}}) + L_{Q}(\theta_{Q}) + L_{d1}(\theta_{\hat{e}}) + L_{csc}(\theta_{\hat{e}}).
\end{equation}
Through these losses, the agent learns a low-dimensional representation of the environment that is meaningful in terms of the $\ell_2$ norm in representation space.
We then employ a planning technique that combines the knowledge of the model and the value function which we use to maximize intrinsic rewards, as detailed in the next section and Section~\ref{combining_mbmf}.

\section{Novelty Search in abstract representational space}

Our approach for exploration uses \textit{intrinsic motivation} \citep{Schmidhuber1991Boredom, Chentanez2005Intrinsic, Achiam2017Surprise} where an agent rewards itself based on the fact that it gathers interesting experiences.
In a large state space setting, states are rarely visited and the count for any state after $n$ steps is almost always 0. While \cite{bellemare2016unifying} solves this issue with density estimation using pseudo-counts directly from the high-dimensional observations, we aim to estimate some function of novelty in our learnt lower-dimensional representation space.

\subsection{Sparsity in representation space as a measure for novelty}
Through the minimization of Equation~\ref{eqn:ib-mbrl}, states that are close together in dynamics are pushed close together in our abstract state space $\mathcal{X}$. Ideally, we want an agent that efficiently explores the \textit{dynamics} of its environment. To do so, we reward our agent for exploring areas in lower-dimensional representation space that are less visited and ideally as far apart from the dynamics that we currently know. 

Given a point $x$ in representation space, we define a reward function that considers the \textit{sparsity} of states around $x$ - we do so with the average distance between $x$ and its $k$-nearest-neighbors in its visitation history buffer $\mathcal{B}$:

\begin{equation}
\label{eqn:novelty}
    \hat{\rho}_{\mathcal{X}}(x) = \frac{1}{k} \sum_{i = 1}^{k}d(x, x_i),
\end{equation}
where $x \ \dot{=} \ \hat{e}(s; \theta_{\hat{e}})$ is a given encoded state, $k \in \mathbb{Z}^+$, $d(\cdot,\cdot)$ is some distance metric in $\mathbb{R}^{n_{\mathcal{X}}}$ and $x_i \ \dot{=} \ \hat{e}(s_i; \theta_{\hat{e}})$, where $s_i \in \mathcal{B}$ for $i = 1 \dots k$ are the $k$ nearest neighbors (by encoding states in $\mathcal{B}$ to representational space) of $x$ according to the distance metric $d(\cdot,\cdot)$. Implicit in this measure is the reliance on the agent's visitation history buffer $\mathcal{B}$.

An important factor in this score is which distance metric to use. With the losses used in Section~\ref{sec:abs_state_reps}, we use $\ell_2$ distance because of the structure imposed on the abstract state space with Equations~\ref{eqn:maxent} and \ref{eqn:csc}.

As we show in Appendix~\ref{appendix:density}, this novelty reward is reminiscent of \textit{recoding probabilities} \citep{bellemare2016unifying, cover2012elements} and is in fact inversely proportional to these probabilities, suggesting that our novelty heuristic estimates visitation count. This is also the same score used to gauge ``sparseness" in behavior space in \cite{Lehman2011Novelty}.

With this reward function, we present the pseudo-code for our exploration algorithm in Algorithm~\ref{algo:novelty_algo}.

\begin{figure}[ht!]
    \centering
    \begin{algorithm}[H]
        \textbf{Initialization:} transition buffer $\mathcal{B}$, agent policy $\pi$\;
        Sample $n_{init}$ initial random transitions, let $t = n_{init}$\;
        \While{$t \leq n_{max}$}{
            \tcp{We update our dynamics model and Q-function every $n_{freq}$ steps}
            \If{$t \mod{n_{freq}} == 0$}{
                \While{$j \leq n_{iters}$ or $L_{\tau} \leq \left( \frac{\omega}{\delta}\right)^2$} {
                    Sample batch of transitions $(s, a, r_{extr}, r_{intr}, \gamma, s') \in \mathcal{B}$\;
                    Train dynamics model with $(s, a, r_{extr}, \gamma, s')$\;
                    Train Q-function with $(s, a, r_{extr} + r_{intr}, \gamma, s')$\;
                }
                $\forall (s, a, r_{extr}, r_{intr}, \gamma, s') \in \mathcal{B}$, set $r_{intr} \leftarrow \hat{\rho}_{\mathcal{X}}(\hat{e}(s'; \theta_{\hat{e}}))$\;
            }
            $a_t \sim \pi(s_t)$\;
            \upshape Take action in environment: $s_{t + 1} \leftarrow \tau(s_t,  a_t)$, $r_{t, extr} \leftarrow R(s_t, a_t)$, $\gamma_{t} \leftarrow \mathcal{G}(s_t, a_t)$\;
            \upshape Calculate intrinsic reward: $r_{t, intr} \leftarrow \hat{\rho}_{\mathcal{X}}(\hat{e}(s_{t + 1}; \theta_{\hat{e}}))$
            
            $\mathcal{B} \leftarrow \mathcal{B} \cup \{ (s_t, a_t, r_{t, extr}, r_{t, intr}, \gamma_{t}, s_{t+1})\}$\;
            
        }
        \caption{The Novelty Search algorithm in abstract representational space. }
        \label{algo:novelty_algo}
    \end{algorithm}

\end{figure}

\subsection{Asymptotic behavior}
This reward function also exhibits favorable asymptotic behavior, as it decreases to 0 as most of the state space is visited. We show this in Theorem~\ref{thm:asymptotics}.

\begin{theorem}
\label{thm:asymptotics}
Assume we have a finite state space $S \subseteq \mathbb{R}^d$, history of states $\mathcal{B} = (s_1, \dots, s_{N})$, encoded state space $\mathcal{X} \subseteq \mathbb{R}^{n_{\mathcal{X}}}$, deterministic mapping $f : \mathbb{R}^d \rightarrow \mathbb{R}^{n_{\mathcal{X}}}$ and a \textit{novelty reward} defined as $\hat{\rho}_{\mathcal{X}}(x)$. With an optimal policy with respect to the rewards of the novelty heuristic, our agent will tend towards states with higher intrinsic rewards. If we assume a communicating MDP setting \citep{Puterman1994MDP}, we have that
\[
\lim\limits_{N \rightarrow \infty} \hat{\rho}_{\mathcal{X}}(f(s)) = 0,\ \forall s \in S.
\]
\end{theorem}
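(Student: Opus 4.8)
The plan is to reduce the asymptotic claim to a pure visitation statement. Since $\hat{\rho}_{\mathcal{X}}(f(s))$ is the average distance from $f(s)$ to its $k$ nearest encoded neighbors drawn from $H$, it vanishes as soon as $H$ contains at least $k$ states that encode to $f(s)$ itself: each such neighbor contributes distance $0$, and no neighbor can be closer than $0$, so the $k$ nearest are all at distance $0$ and the average is exactly $0$. Thus it suffices to show that, as $N \rightarrow \infty$, every state $s \in S$ (equivalently every encoded point $f(s)$) is visited at least $k$ times, and in fact infinitely often.

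First I would exploit the finiteness of $S$. Because $S$ is finite, its image $f(S) \subseteq \mathcal{X}$ is a finite set of points, so $\delta := \min\{ d(f(s), f(s')) : s, s' \in S,\ f(s) \neq f(s') \}$ is a well-defined \emph{positive} number, being a minimum over a finite collection of strictly positive distances. This separation yields a clean lower bound: if an encoded point $x = f(s)$ has been matched by strictly fewer than $k$ states of $H$, then at least one of its $k$ nearest neighbors is a distinct encoded point and therefore sits at distance $\geq \delta$, whence $\hat{\rho}_{\mathcal{X}}(x) \geq \delta / k > 0$. So the novelty reward is bounded away from $0$ precisely at the under-visited states.

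Next I would argue by contradiction that no state can remain under-visited. Suppose some state $s^{*}$ were visited only finitely often; then past some step its encoded point retains novelty reward at least $\delta/k$, whereas the states visited infinitely often have rewards decaying to $0$ as their neighbor sets fill with zero-distance copies. In a communicating MDP every state is reachable from every other in finitely many steps, so $s^{*}$ stays attainable at all times. An agent whose policy is optimal with respect to the (non-stationary) novelty reward, and which by hypothesis tends toward states of higher intrinsic reward, would eventually be driven back into $s^{*}$, since it is the persistently most-novel reachable target. This contradicts $s^{*}$ being visited only finitely often, so every state is visited infinitely often.

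Combining the two steps finishes the argument: for any fixed $s$, infinitely many entries of $H$ encode to $f(s)$, so for all large $N$ at least $k$ of them do, forcing the $k$ nearest neighbors to lie at distance $0$ and hence $\hat{\rho}_{\mathcal{X}}(f(s)) = 0$ for all large $N$, and the limit is $0$. I expect the main obstacle to be the contradiction step, namely making precise how optimality with respect to the novelty reward, together with the communicating assumption, forces infinite revisitation when the reward is itself non-stationary and history-dependent. The cleanest route is probably to show that the set of infinitely-visited states is invariant under the optimal dynamics, yet by the separation bound $\delta/k$ and communicability it cannot be a proper subset of $S$ without leaving a strictly-higher-reward state permanently reachable and therefore eventually preferred.
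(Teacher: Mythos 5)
Your proposal is correct and follows essentially the same route as the paper's proof: reduce the claim to showing every state is visited infinitely often (so that for large $N$ the $k$ nearest neighbors of $f(s)$ are zero-distance copies of itself), then invoke the communicating assumption and optimality with respect to the novelty reward to get infinite visitation. Your version is in fact somewhat more careful than the paper's, which simply asserts $n_s \rightarrow \infty$ for all $s$, whereas you supply the finite-separation bound $\delta/k$ and an explicit contradiction argument for why an under-visited state cannot persist.
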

\begin{proof}
We prove this theorem in Appendix~\ref{appendix:limiting-behaviour}.
\end{proof}

\subsection{Combining model-free and model-based components for exploration policies}
\label{combining_mbmf}

Similarly to previous works \citep[e.g.][]{oh2017value, chebotar2017combining}, we use a combination of model-based planning with model-free Q-learning to obtain a good policy. 
We calculate rollout estimates of next states based on our transition model $\hat{\tau}$ and sum up the corresponding rewards, which we denote as $r: \mathcal{X} \times A \rightarrow [0, R_{max}]$ and can be a combination of both intrinsic and extrinsic rewards. We calculate expected returns based on the discounted rewards of our $d$-depth rollouts:
\begin{equation}
    \hat{Q}^d(x, a) = 
    \begin{cases}
    r(x, a) + \hat{\gamma}(x, a; \theta_{\hat{\gamma}}) \times \\
    \ \max\limits_{a' \in \mathcal{A}} \hat{Q}^{d - 1}(\tau(x, a; \theta_{\hat{\tau}}), a'), \ &\text{if } d > 0\\
    Q(x, a; \theta_{Q}), \ &\text{if } d = 0
    \end{cases}
\end{equation}
Note that we simulate only $b$-best options at each expansion step based on $Q (x, a; \theta_Q)$, where $b \leq |\mathcal{A}|$. In this work, we only use full expansions.
The estimated optimal action is given by

$$a^{*} = \underset{a \in \mathcal A}{\operatorname{argmax}} \ \hat{Q}^d(x, a).$$

The actual action chosen at each step follows an $\epsilon$-greedy strategy ($\epsilon \in [0,1]$), where the agent follows the estimated optimal action with probability $1-\epsilon$ and a random action with probability $\epsilon$.

\section{Experiments}
We conduct experiments on environments of varying difficulty. All experiments use a training scheme where we first train parameters to converge on an accurate representation of the already experienced transitions before taking an environment step. 
We optimize the losses (over multiple training iterations) given in Section~\ref{sec:abs_state_reps}. 
We discuss all environment-specific hyperparameters in Appendix~\ref{app:hyperparams}.

\begin{figure}
    \centering
    \captionsetup{width=\linewidth}
    \begin{subfigure}[h]{0.31\linewidth}
    \includegraphics[width=\linewidth]{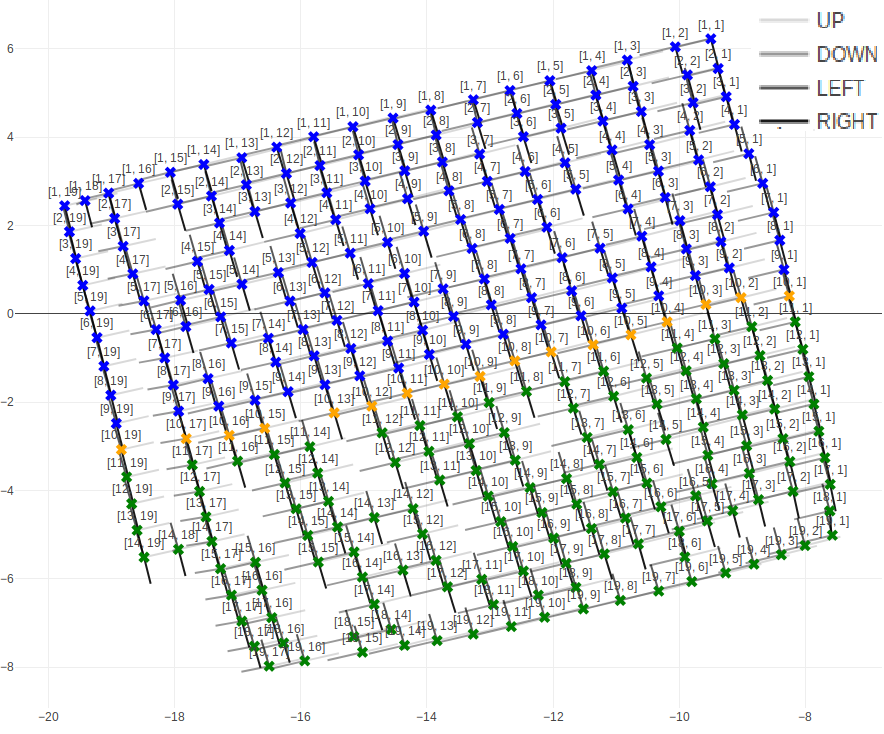}
    \caption{}
    \label{fig:abs_rep_2d}
    \end{subfigure}
    \begin{subfigure}[h]{0.31\linewidth}
    \includegraphics[width=\linewidth]{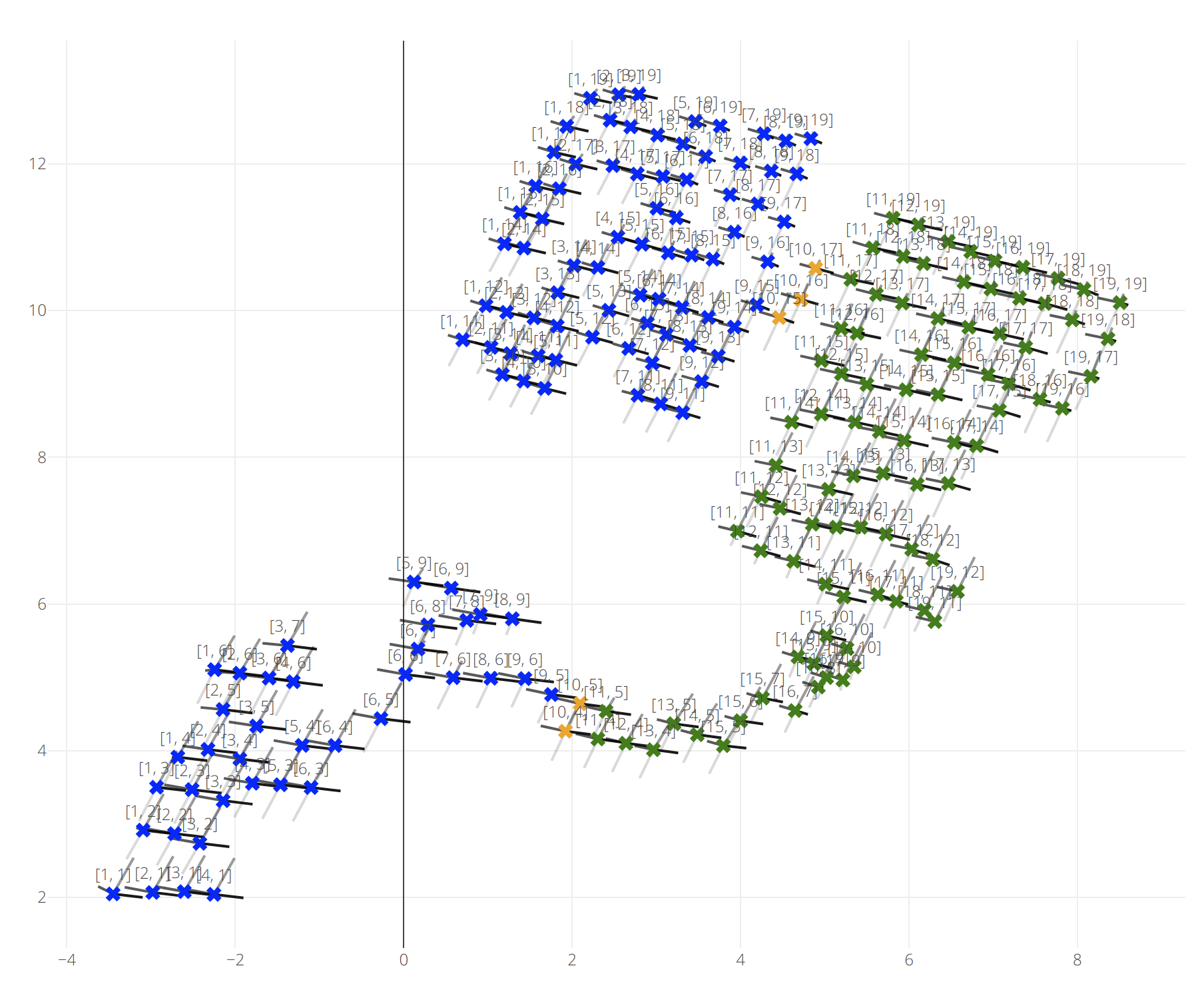}
    \caption{}
    \label{fig:H_maze_representation}
    \end{subfigure}
    \begin{subfigure}[h]{0.31\linewidth}
    \includegraphics[width=\linewidth]{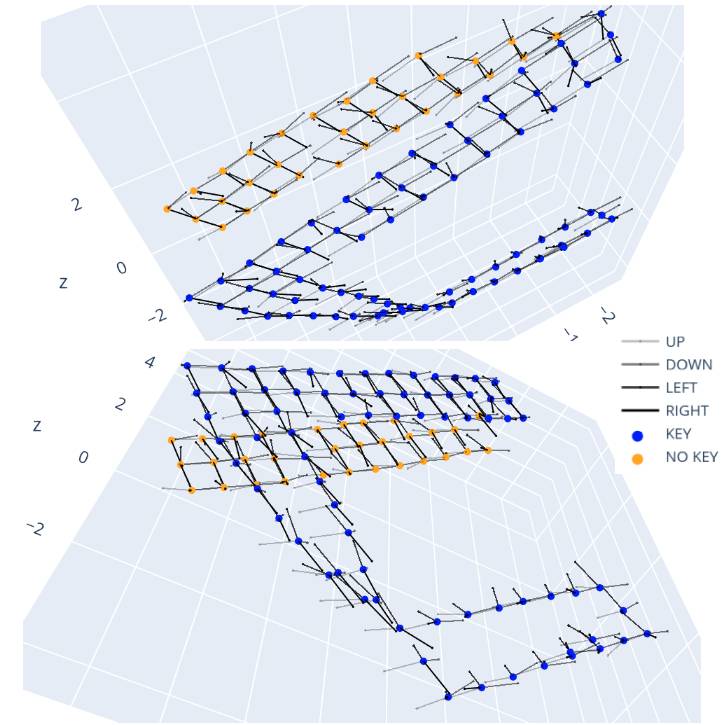}
    \caption{}
    \label{fig:H_maze_representation}
    \end{subfigure}
    \caption{(a), (b): Plotting the full history of learned abstract representations of both open and 4-room labyrinth environments from Figures~\ref{fig:empty_gridworld} and \ref{fig:walled_gridworld} after 500 environment steps. Colors denote which side of the maze the agent was in, grid coordinates and transitions are shown. (c): Two views of the same full history of learned abstract 3-dimensional representation of our multi-step maze after 300 steps. Orange and blue points denote states without and with keys respectively. Our agent is able to disentangle states where the agent has a key and when it doesn't as seen in the distance between the two groups of states. Meaningful information about the agent position is also maintained in the relative positions of states in abstract state space.}
    \label{fig:abstr_rep}
\end{figure}

\subsection{Labyrinth exploration}

We consider two $21 \times 21$ versions of the grid-world environment (Figure~\ref{fig:gridworld_examples} in Appendix). The first is an open labyrinth grid-world, with no walls except for bordering walls. The second is a similar sized grid-world split into four connected rooms.
In these environments the action space $\mathcal{A}$ is the set of four cardinal directions. These environments have no rewards or terminal states and the goal is to explore, agnostic of any task.
We use two metrics to gauge exploration for this environment: the first is the ratio of states visited only once, the second is the proportion of total states visited. 

\subsubsection{Open labyrinth}
In the open labyrinth experiments (Figure~\ref{fig:open_gridworld_results}), we compare a number of variations of our approach with a random baseline and a count-based baseline \citep{bellemare2016unifying} (as we can count states in this tabular setting). 
Variations of the policy include an argmax over state values ($d=0$) and planning depths of $d \in \{1, 5\}$. 
All variations of our method outperform the two baselines in this task, with a slight increase in performance as planning depth $d$ increases.
In the open labyrinth, our agent is able to reach 100\% of possible states (a total of $19 \times 19 = 361$ unique states) in approximately 800 steps, and 80\% of possible states ($\approx 290$ states) in approximately $500$ steps. These counts also include the $n_{init}$ number of random steps taken preceding training. 

Our agent is also able to learn highly interpretable abstract representations in very few environment steps (as shown in Figure~\ref{fig:abs_rep_2d}) as it explores its state space.
In addition, after visiting most unseen states in its environment, our agent tends to uniformly explore its state space due to the nature of our novelty heuristic. A visualisation of this effect is available in Appendix \ref{app:uniform-exploration}.


\subsubsection{4-room labyrinth}
\label{sec:4room}
We now consider the 4-room labyrinth environment, a more challenging version of the open labyrinth environment (Figure~\ref{fig:abs_rep_2d}). 
As before, our encoder $\hat{e}$ is able to take a high-dimensional input and compress it to a low-dimensional representation.
In the case of both labyrinth environments, the representation incorporates knowledge related to the position of the agent in 2-dimensions that we call \textit{primary features}.
In the 4-room labyrinth environment, it also has to learn other information such as agent surroundings (walls, open space) etc., but it does so only via the transition function learned through experience.
We call this extraneous but necessary information \textit{secondary features}.
As most of these secondary features are encoded only in the dynamics model $\hat{\tau}$, our agent has to experience a transition in order to accurately represent both primary and secondary features.

In this environment specifically, our dynamics model might over-generalize for walls between rooms and can sometimes fail at first to try out transitions in the passageways between rooms.
However, because our agent tends to visit uniformly all the states that are reachable within the known rooms,
the $\epsilon$-greedy policy of our approach still ensures that the agent explores passageways efficiently even in the cases where it has over-generalized to the surrounding walls.

We run the same experiments on the 4-room labyrinth domain as we do on the open labyrinth and report results in Figure~\ref{fig:walled_gridworld_results}. In both cases, our method outperforms the two baselines in this domain (random and count-based).

\begin{figure}
    \centering
    \captionsetup{width=\linewidth}
    \begin{subfigure}[ht]{0.49\linewidth}
    \includegraphics[width=\linewidth]{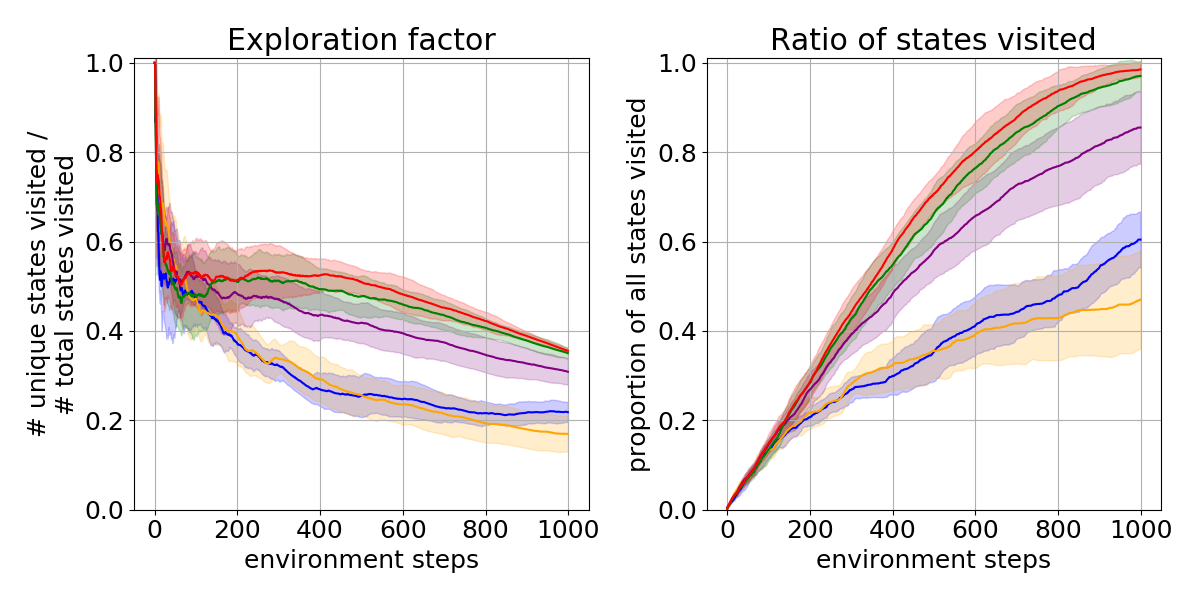}
    \caption{Results for open labyrinth and different variations on policies compared to baselines.}
    \label{fig:open_gridworld_results}
    \end{subfigure}
    \hfill
    \begin{subfigure}[ht]{0.49\linewidth}
    \includegraphics[width=\linewidth]{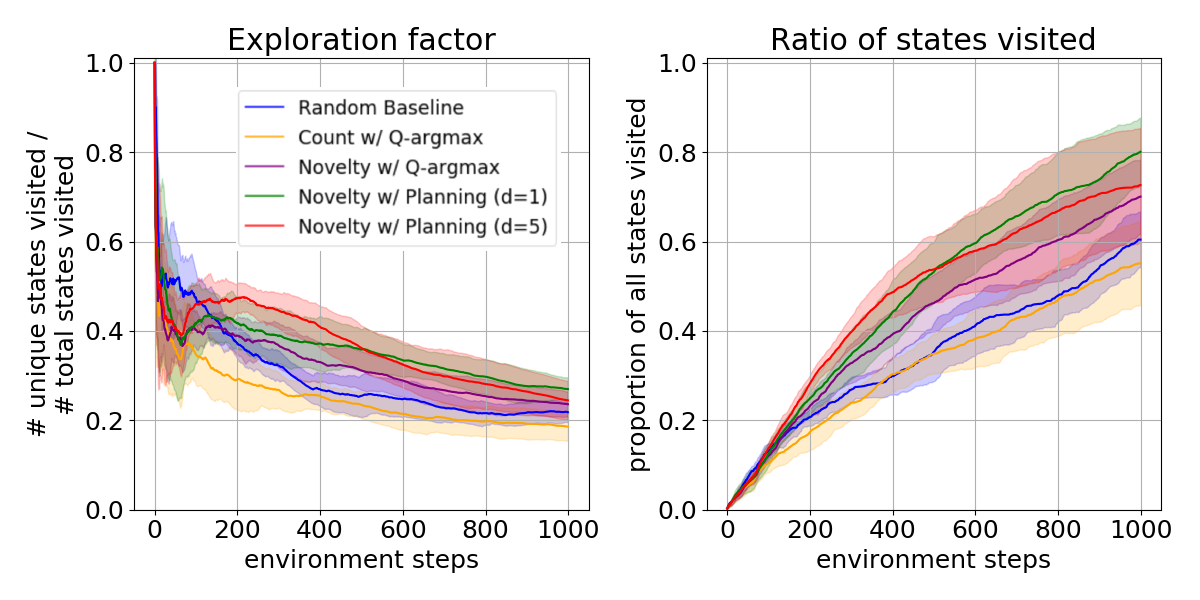}
    \caption{Results for the 4-room labyrinth and different variations on policies compared to baselines.}
    \label{fig:walled_gridworld_results}
    \end{subfigure}
    \caption{Labyrinth results for both open labyrinth and 4-room labyrinth over 10 trials, showing mean and standard deviations.}
    \label{fig:gridworld_results}
\end{figure}

\subsection{Control and sub-goal exploration}
In order to test the efficacy of our method beyond fixed mazes, we conduct experiments on the control-based environment Acrobot \citep{Brockman2016Gym} and a multi-step maze environment. Our method (with planning depth $d = 5$) is compared to strong exploration baselines with different archetypes:
\begin{enumerate}
    \item Prediction error incentivized exploration \citep{Stadie2015TransitionLoss}
    \item Hash count-based exploration \citep{Tang2016Hash}
    \item Random Network Distillation \citep{Osband2017Randomized}
    \item Bootstrap DQN (BDQN, \cite{Osband2016Bootstrap})
\end{enumerate}
In order to maintain consistency in our results, we use the same deep learning architectures throughout. 
Since we experiment in the deterministic setting, we exclude baselines that require some form of stochasticity or density estimation as baselines (for example, \cite{Shyam2018MAX} and \cite{Osband2017Randomized}). 
A specificity of our approach is that we run multiple training iterations in between each environment step for all experiments, which allows the agent to use orders of magnitude less samples as compared to most model-free RL algorithms (all within the same episode).

\subsubsection{Acrobot}
We now test our approach on Acrobot \citep{Brockman2016Gym}, which has a continuous state space unlike the labyrinth environment.
We specifically choose this control task because the nature of this environment makes exploration inherently difficult. The agent only has control of the actuator for the inner joint and has to transfer enough energy into the second joint in order to swing it to its goal state. 
We modify this environment so that each episode is at most $3000$ environment steps. While this environment does admit an extrinsic reward, we ignore these rewards entirely.
To measure the performance of our exploration approach, we measure the average number of steps per episode that the agent takes to move its second joint above a given line as per Figure~\ref{fig:acrobot_examples}.

To demonstrate the ability of our method to learn a low dimensional abstract representation from pixel inputs, we use 4 consecutive pixel frames as input instead of the $6$-dimensional full state vector. 
We use a $4$-dimensional abstract representation of our state and results from experiments are shown in Table~\ref{tab:acrobot_hmaze_results}.
Our method reaches the goal state more efficiently than the baselines. 

\begin{figure}[ht]
    \captionsetup{width=\linewidth}
    \begin{subfigure}{0.49\linewidth}
        \centering
        \begin{subfigure}[ht]{0.45\linewidth}
        \includegraphics[width=\linewidth, frame]{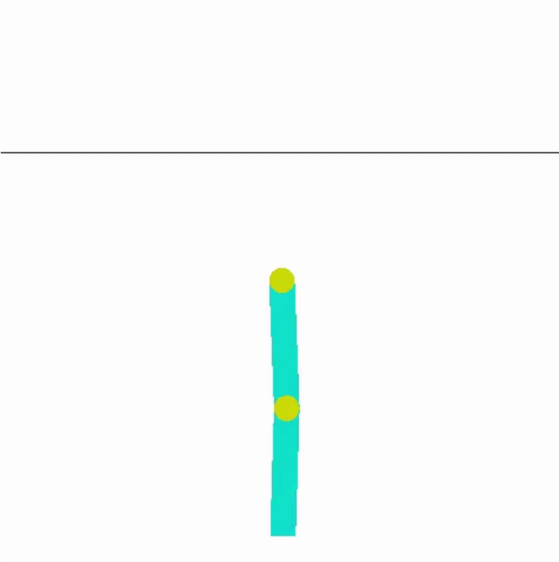}
        \end{subfigure}
        \ \
        \begin{subfigure}[ht]{0.45\linewidth}
        \includegraphics[width=\linewidth, frame]{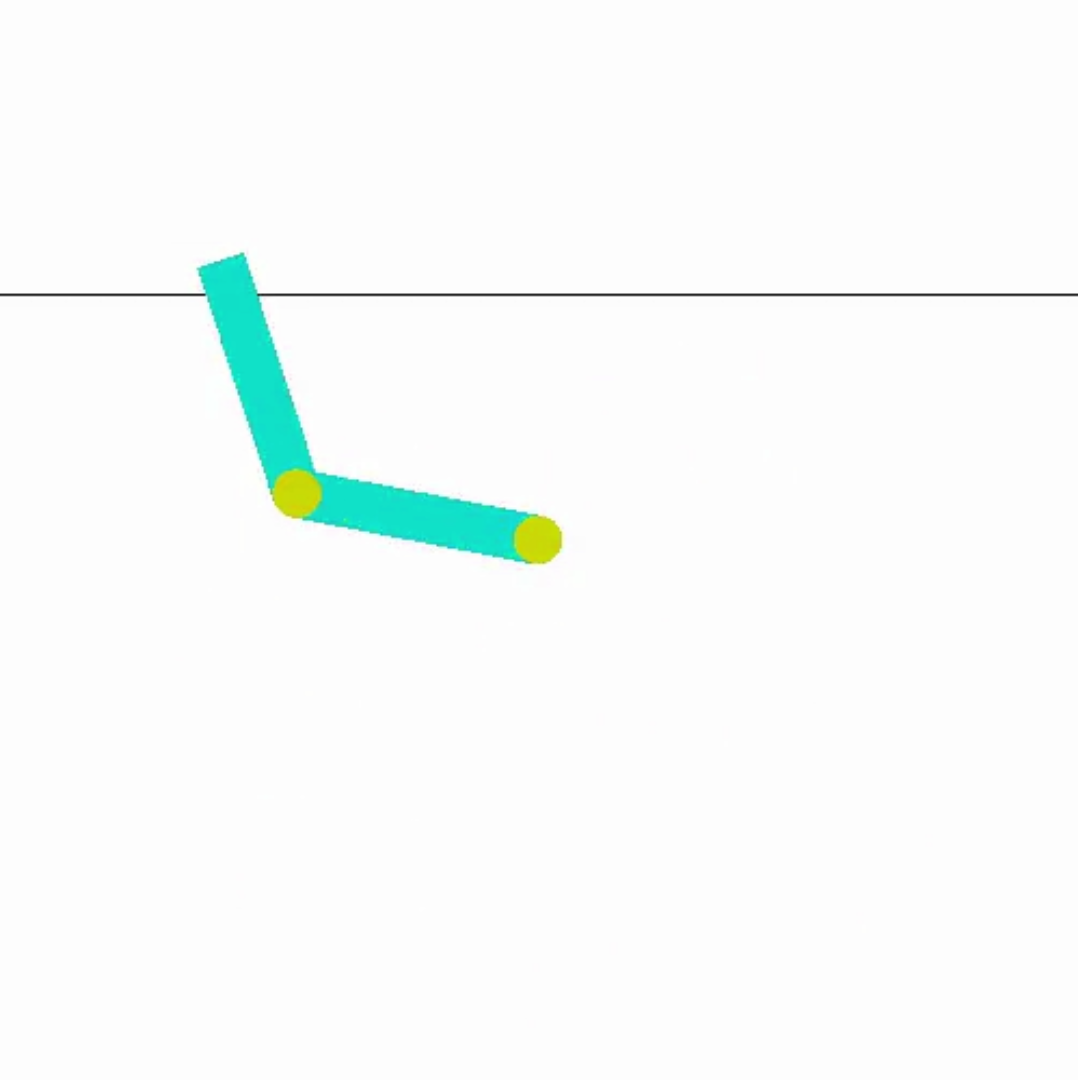}
        \end{subfigure}%
        \caption{\textit{Left}: Acrobot start state. \textit{right}: Acrobot end state}
        \label{fig:acrobot_examples}

    \end{subfigure}
    \begin{subfigure}{0.49\linewidth}
        \centering
        \begin{subfigure}[h]{0.45\linewidth}
        \includegraphics[width=\linewidth, frame]{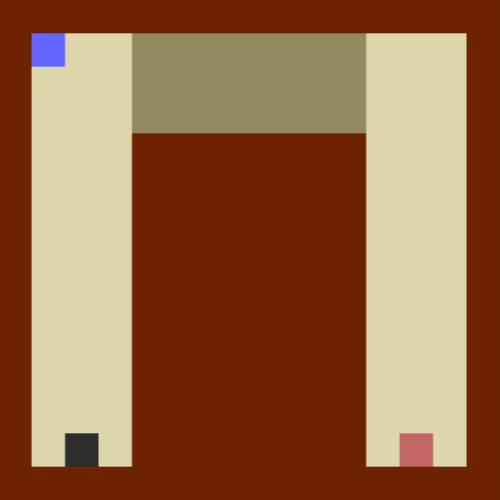}
        \end{subfigure}
        \ \
        \begin{subfigure}[h]{0.45\linewidth}
        \includegraphics[width=\linewidth, frame]{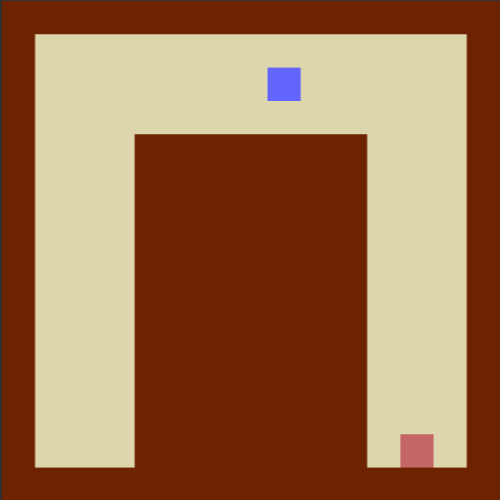}
        \end{subfigure}%
        \caption{\textit{Left}: Start of our multi-step maze. \textit{right}: After the agent has collected the key.}
        \label{fig:multi_step}

    \end{subfigure}

    \caption{Illustrations of the Acrobot and multi-step goal maze environments.
    \textit{a) Left:} The Acrobot environment in one configuration of its start state.
    \textit{a) Right:} One configuration of the ending state of the Acrobot environment. The environment finishes when the second arm passes the solid black line.
    \textit{b) Left:} The passageway to the west portion of the environment is blocked before the key (black) is collected. \textit{b) Right:} The passageway is traversable after collecting the key, and the reward (red) is then available. The environment terminates after collecting the reward.}
    \label{fig:environments}
\end{figure}

\begin{table}
    \centering
    \captionsetup{width=\linewidth}

    \begin{tabular}{|r|c|c|c|c|c|c|c|c|c|}
        \toprule
        & \multicolumn{3}{|c|}{Acrobot} 
        & \multicolumn{3}{c|}{Multi-step Maze}
        & \multicolumn{3}{c|}{Norm. \& Combined}\\
        \midrule
        Reward & \text{Avg} & \text{StdErr} & \text{p-value} & \text{Avg} & \text{StdErr} & \text{p-value} & \text{Avg} & \text{StdErr} & \text{p-value}\\
        \midrule
        \text{Random} & 1713.3 & 316.25 & 0.0077 & 1863.3 & 308.35 & 0.0025 & 3.26 & 0.41 & $3.1e^{-5}$\\
        \text{Pred} & 932.8 & 141.54 & 0.050 & 1018.0 & 79.31 & $4.0e^{-4}$ & 1.78 & 0.15 & $1.3e^{-4}$\\
        \text{Count} & 1007.0 & 174.81 & 0.050 & 658.8 & 71.73  & 0.23 & 1.50 & 0.18 & 0.019\\
        \text{RND} & 953.8 & 85.98 & 0.0042 & 938.4 & 135.88 & 0.024 & 1.72 & 0.15 & $3.5e^{-4}$\\
        \text{BDQN} & 592.5 & 43.65 & 0.85 & 1669.1 & 291.26 & 0.0046 & 2.11 & 0.37 & 0.0099\\
        \text{Novelty} & \textbf{576.0} & 66.13 & - & \textbf{524.6} & 73.24 & -  & \textbf{1.00} & 0.090 & -\\
        \bottomrule
    \end{tabular}
    \medskip
    \caption{Number of environment steps necessary to reach the goal state in the Acrobot and the multi-step maze environments (lower is better). Results are averaged over 5 trials for both experiments. Best results are in bold. We provide p-values indicative of the null hypothesis
    $H_0 : \Delta \mu = \mu_1 - \mu_2 = 0$, calculated using Welch’s t-test, all as per \citep{Colas2019Hitchiker}. In this case, we do a pair-wise comparison between the central tendencies of our algorithm (Novelty) and our baselines. Normalized and combined results are also shown - results here are first normalized with respect to the average number of steps taken for our algorithm and then combined on both environments.}
    \label{tab:acrobot_hmaze_results}
\end{table}
\subsubsection{Multi-step goal maze}
\label{section:multi_step_maze}
We also test our method on a more complex maze with the sub-task of picking up a key that opens the door to an area with a reward. We build our environment with the Pycolab game engine \citep{Stepleton2017Pycolab}. The environment can be seen in Figure \ref{fig:multi_step}, where the input to our agent is a top-down view of the environment. While this environment does admit an extrinsic reward (1 for picking up the key, 10 for reaching the final state), we ignore these rewards and only focus on intrinsic rewards.

In our experiments, we show that our agent is able to learn an interpretable representation of the environment in a sample-efficient manner. Figure~\ref{fig:H_maze_representation} shows an example of learnt representations in this domain after reaching the goal - we observe that positions in the maze correspond to a nearly identical structure in the lower-dimensional representation. Our representation also nicely captures internal state information (whether the key has been picked up) by separating the two sets of states (states when the key has been picked up and states when the key has not been picked up). Similar positions in both sets of states are also mapped closely together in lower-dimensional space (ie. (1, 1, \textit{with key}) is close in $\ell_2$ to (1, 1, \textit{without key})), suggesting good generalization between similar states.

\section{Related work}
The proposed exploration strategy falls under the category of directed exploration \citep{thrun1992efficient} that makes use of the past interactions with the environment to guide the discovery of new states.
This work is inspired by the Novelty Search algorithm \citep{Lehman2011Novelty} that uses a nearest-neighbor scoring approach to gauge novelty in policy space. Our approach leverages this scoring to traverse dynamics space, which we motivate theoretically.
Exploration strategies have been investigated with both model-free and model-based approaches. In \cite{bellemare2016unifying} and \cite{ostrovski2017count}, a model-free algorithm provides the notion of novelty through a pseudo-count from an arbitrary density model that provides an estimate of how many times an action has been taken in similar states. Recently, \cite{Taiga2020BonusALE} do a thorough comparison between bonus-based exploration methods in model-free RL and show that architectural changes may be more important to agent performance (based on extrinsic rewards) as opposed to differing exploration strategies. 

Several exploration strategies have also used a model of the environment along with planning.
\cite{Hester2012IntrinsicallyMM} employ a two-part strategy to calculate intrinsic rewards, combining model uncertainty (from a random-forest based model) and a novelty reward based on $L_1$ distance in feature space.
A strategy investigated in \citet{salge2014changing,mohamed2015variational,gregor2016variational,chiappa2017recurrent} is to have the agent choose a sequence of actions by planning that leads to a representation of state as different as possible to the current state.
In \citet{pathak2017curiosity,haber2018learning}, the agent optimizes both a model of its environment and a separate model that predicts the error/uncertainty of its own model.
\cite{Burda2018Curosity} similarly uses an intrinsic reward based on the uncertainty of its dynamics model.
In \cite{Shyam2018MAX}, forward models of the environment are used to measure novelty derived from disagreement between future states.
\cite{Still2012InfoCuriosity} take an information theoretic approach to exploration, that chooses a policy which maximizes the predictive power of the agent's own behavior and environment rewards.
In \cite{badia2020up}, an intrinsic reward from the k-NN over the agent's experience is also employed for exploration. They instead employ a self-supervised inverse dynamics model to learn the embeddings as opposed to our approach.
Beyond improved efficiency in exploration, the interpretability of our approach could also lead to human-in-the-loop techniques \citep{mandel2017add,abel2017agent} for exploration, with the possibility for the agent to better utilize feedback from interpretability of the agent in representation space.


\section{Discussion}
In this paper, we formulate the task of dynamics learning in MBRL through the \textit{Information Bottleneck} principle. We present methods to optimize the IB equation through low-dimensional abstract representations of state. We further develop a novelty score based on these learnt representations that we leverage as an intrinsic reward that enables efficient exploration. By using this novelty score with a combination of model-based and model-free approaches for planning, we show more efficient exploration across multiple environments with our learnt representations and novelty rewards.

As with most methods, our approach also has limitations.
One limitation we may have is the scalability of non-parametric methods such as k-NN density estimation since our method scales linearly with the number of environment steps. A possible solution to this problem would be to use some sampling scheme to sample a fixed number of observations for calculation of our novelty heuristic. 
Another issue that has arisen from using very low-dimensional space to represent state is generalization. In some cases, the model can over-generalize with the consequence that the low-dimensional representation loses information that is crucial for the exploration of the entire state space. An interesting direction for future work would be to find ways of incorporating secondary features such as those mentioned in Section~\ref{sec:4room}. An interesting possibility would be to use a similar IB method, but using a full history of states as the conditioning variable. Beyond these points, we discuss limitations and potential improvements to this work in Appendix~\ref{appendix:improvements}.

Finally, we show preliminary results of our method on a more complex task - \textit{Montezuma's Revenge} - in Appendix~\ref{app:montezuma}. With the theory and methods developed in this paper, we hope to see future work done on larger tasks with more complex environment dynamics.

\newpage

\section*{Broader Impact}
Algorithms for exploring an environment are a central piece of learning efficient policies for unknown sequential decision-making tasks. 
In this section, we discuss the wider impacts of our research both in the Machine Learning (ML) field and beyond.

We first consider the benefits and risks of our method on ML applications. 
Efficient exploration in unknown environments has the possibility to improve methods for tasks that require accurate knowledge of its environment. By exploring states that are more novel, agents have a more robust dataset. For control tasks, our method improves the sample efficiency of its learning by finding more novel states in terms of dynamics for use in training.
Our learnt low-dimensional representation also helps the interpretability of our decision making agents (as seen in Figure~\ref{fig:abstr_rep}). More interpretable agents have potential benefits for many areas of ML, including allowing human understandability and intervention in human-in-the-loop approaches.

With such applications in mind, we consider societal impacts of our method, along with potential future work that could be done to improve these societal impacts. One specific instance of how efficient exploration and environment modeling might help is in disaster relief settings. With the incipience of robotic systems for disaster area exploration, autonomous agents need to efficiently explore their unknown surroundings. Further research into scaling these MBRL approaches could allow for these robotic agents to find points of interest (survivors, etc.) efficiently.

One potential risk of our application is safe exploration. Our method finds and learns from states that are novel in terms of its dynamics. Without safety mechanisms, our agent could view potentially harmful scenarios as novel due to the rarity of such a situation. For example, a car crash might be seen as a highly novel state. To mitigate this safety concern we look to literature on Safety in RL \citep{Garcia2015Safety}. In particular, developing a risk metric based on the interpretability of our approach may be an area of research worth developing.

\section*{Acknowledgements}
We would like to thank Emmanuel Bengio for the helpful discussions and feedback on early drafts of this work. We would also like to thank all the reviewers for their constructive and helpful comments.

\newpage
\bibliography{bib}
\bibliographystyle{apalike}

\clearpage

\appendix
\section{Using $\omega$ to gauge model accuracy}
\label{omega_accuracy}
The hyperparameter $\omega$ can be used to estimate the accuracy of our transition loss, hence of our novelty estimates. In order to gauge when our representation is accurate enough to use our novelty heuristic, we use a function of this hyperparameter and transition loss to set a cutoff point for accuracy to know when to take the next environment step.
If $\omega$ is the minimum distance between successive states, then when $L_{\tau} \leq \left(\frac{\omega}{\delta}\right)^2$, the transitions are on average within a ball of radius $\frac{\omega}{\delta}$ of the target state. Here $\delta > 1$ is a hyperparameter that we call the slack ratio. Before taking a new step in the environment, we keep training all the parameters with all these losses until this threshold is reached and our novelty heuristic becomes useful. The abstract representation dimensionality is also another hyperparameter that requires tuning, depending on the complexity of the environment.
Details on the slack ratios, abstract representation dimensionality and other hyperparameters are given in Appendix \ref{app:hyperparams}.

\section{Discussion on the distance between successive encoded states}
\label{appendix:discu_csc}
As for our soft constraints on representation magnitude, we use a local constraint instead of a global constraint on magnitude such that it is more suited for our novelty metric. If we are to calculate some form of intrinsic reward based on distance between neighboring states, then this distance needs to be non-zero and ideally consistent as the number of unique states in our history increases. In the global constraint case, if the intrinsic rewards decrease with an increase in number of states in the agent's history, then the agent will fail to be motivated to explore further. 
Even though the entropy maximization losses ensures the maximization of distances between random states, if we have $|\mathcal{B}|$ number of states in the history of the agent, then a global constraint on representation magnitude might lead to
\begin{equation}
    \lim_{|\mathcal{B}| \rightarrow \infty} \mathbb{E}_{(s, s') \sim (\mathcal{B}, \mathcal{B})}[\lVert s - s' \lVert_2] = 0.
\end{equation}
We also test the necessity of this loss in Appendix~\ref{appendix:csc_ablation} and see that without this loss, we incur a high variance in exploration performance.

\section{Motivation for $\ell_2$ distance}
\label{appendix:l2-motivation}
We consider the consecutive distance loss $L_{csc}$. Minimization of this loss ensures that the distance between two consecutive states is $\leq \omega$. This along with our alignment and uniformity losses, $L_{\hat{\tau}}$ and $L_{d1}$ ensures that temporally close states are close in representational space and states are uniformly spread throughout this space. This implies that the minima of $L_{csc}$ between two consecutive states $s$ and $s'$ will occur when:
\begin{align*}
L^*_{consec} &= \min_{\theta_{\hat{e}}} L_{csc}(\theta_{\hat{e}}) \\
&= \min_{\theta_{\hat{e}}} max(\lVert \hat{e}(s; \theta_e) - \hat{e}(s'; \theta_e) \rVert_{2} - \omega, 0)\\
&= \min_{\theta_{\hat{e}}} \left[ \lVert \hat{e}(s; \theta_e) - \hat{e}(s'; \theta_e) \rVert_{2} - \omega \right]
\end{align*}

The minimum of this loss is obtained when the $\ell_2$ distance between $s$ and $s'$ is $\omega$. When this loss is minimized, $\ell_2$ distance is well-defined in our representation space, which implies that our novelty heuristic will also be well-defined. These losses shape abstract state space so that $\ell_2$ norm as a distance measure encodes a notion of closeness in state space that we leverage in our novelty heuristic.

\section{Novelty heuristic as an inverse recoding probability score}
\label{appendix:density}
Consider $P(X_{n + 1} = x \ | \ X_{1:n} = x_{1:n})$, the \textit{recoding probability} of state $x$ at timestep $n + 1$. We try to show that our novelty heuristic of a state is inversely proportional to its recoding probability, or that:
\[
\rho_{\mathcal{X}}(x) = \frac{c}{P(X_{n + 1} = x \ | \ X_{1:n} = x_{1:n})}
\]
where $c$ is some constant.
If we were to try and estimate our recoding probability first using a non-parametric method then using its inverse, we might consider the K-nearest neighbor approach \citep{loftsgaarden1965nndensity}:
\[
P(X_{n + 1} = x \ | \ X_{1:n} = x_{1:n}) \approx \frac{k}{n V_{x, x_k}}
\]
where $k < n$ is an integer and $V_{x, x_k}$ is the volume around $x$ of radius $d(x, x_k)$, where $x_k \in \mathcal{X}$ is the $k$th nearest neighbor to $x$. The issue with this approach is that our score is only dependent on it's $k$th nearest neighbor (as this score only depends on $V_{x, x_k}$), and doesn't take into account the other $k - 1$ nearest neighbors. In practice, we instead try to find something proportional to the inverse directly: we average each of the 1-nearest neighbor density inverses over the k-nearest neighbors:
\begin{align*}
    \rho_{\mathcal{X}}(x) &= \frac{c}{P(X_{n + 1} = x \ | \ X_{1:n} = x_{1:n})}\\
    &\approx nV_{x, x_1}\\
    &\approx \frac{n}{k}\sum_{i = 1}^{k}V_{x, x_i}
\end{align*}
Since we're only worried about proportionality, we can remove the constant $n$ and replace our volume of radius between two points with a distance metric $d$:
\[
\rho_{\mathcal{X}}(x) \propto \hat{\rho}_{\mathcal{X}}(x) = \frac{1}{k} \sum_{i = 1}^{k}d(x, x_i).
\]
Which is our novelty heuristic.

\section{Limiting behavior for novelty heuristic}
\label{appendix:limiting-behaviour}

\begin{proof}[Proof \ (Theorem~\ref{thm:asymptotics})]
Let $n_s$ be the visitation count for a state $s \in S$. We assume that our agent's policy will tend towards states with higher rewards. Given the episodic nature of MDPs, we have that over multiple episodes all states communicate. Since our state space is finite, we have that
\[
\lim\limits_{n \rightarrow \infty}n_s = \infty, \ \forall s \in S.
\]
which means that $\exists n$ such that $k < n_s$ as $n \rightarrow \infty$, and implies that the $k$ nearest neighbors of $s$ are indiscernible from $s$. Since $f$ is a deterministic function, $x_i = x$ for all $i$. We also assume that our agent's policy will tend towards states with higher rewards. As $x$ and $x_i$ are indiscernible and $dist$ is a properly defined metric, $dist(x, x_i) = 0$ for all $i$, and we have
\begin{align}
    \lim\limits_{n \rightarrow \infty} \hat{\rho}(x) &= \frac{1}{k}\sum_{i = 1}^{k} dist(x, x_i)\\
    &= 0.
\end{align}
\end{proof}

\section{Ablation study}
Here we perform ablation studies to test the affects of our losses and hyperparameters.

\subsection{Consecutive distance loss}
\label{appendix:csc_ablation}
To further justify our use of the $L_{csc}$ loss, we observe the results of running the same trials in the simple maze environment (with no walls) with no $L_{csc}$ loss in Figure~\ref{fig:no_csc_no_walls_ablation}.
\begin{figure}[ht!]
    \centering
    \includegraphics[width=\linewidth]{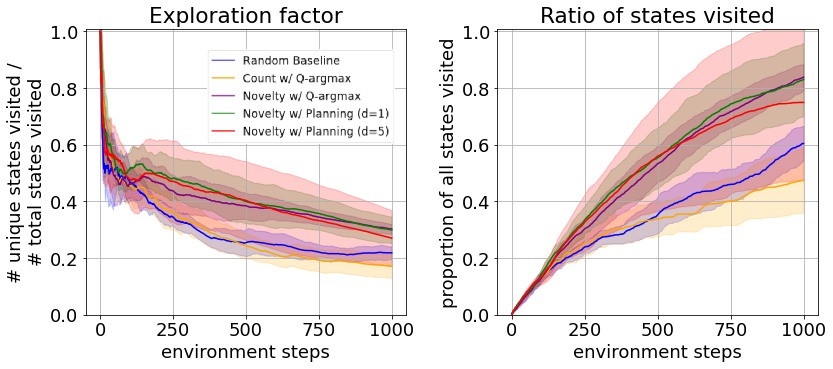}
    \caption{Simple maze (with no walls) experiment with no $L_{csc}$ loss.}
    \label{fig:no_csc_no_walls_ablation}
\end{figure}
As we increase the number of forward prediction steps the exploration is less effective and variance of our results increases. Without the relative distance constraints of our representation, we observe an increase of forward prediction errors, which is the likely cause of the decrease in performance. These forward prediction errors are further compounded as we increase the number of forward prediction steps (as can be seen when comparing the standard error between $d = 0, 1, 5$).
\newpage
\subsection{Pure model-based/model-free}
We test the importance of using a combination of both model-based and model-free components on the multi-step maze environment introduced in Section~\ref{section:multi_step_maze}. We train with the same hyperparameters but in the model free ($d = 0$) and model-based ($d = 5$, no Q-value tails) settings. We show results in Table~\ref{appendix:mbmf-ablation}.
\begin{table}
    \centering
    \captionsetup{width=\linewidth}
    \begin{tabular}{|r|c|c|c|}
        \toprule
        Ablation & Avg ($\mu$) & StdErr & p-value \\
        \midrule
        MF & 758.60 & 169.08 & 0.25\\
        MB & 584.10 & 64.52 & 0.57\\
        Full & 524.60 & 73.24 & - \\
        \bottomrule
    \end{tabular}
    \medskip
    \caption{A further ablation study on the multi-step maze environment. The MF (model-free) ablation does not employ any forward intrinsic reward planning ($d = 0$), while the MB (model-based) ablation only uses forward intrinsic reward planning without using or learning Q-values.}
    \label{appendix:mbmf-ablation}
    
\end{table}

\section{Montezuma's Revenge visualizations}
\label{app:montezuma}

\begin{figure}[ht!]
    \centering
    \captionsetup{width=\linewidth}

    \begin{subfigure}[h]{0.53\linewidth}
    \includegraphics[width=\linewidth]{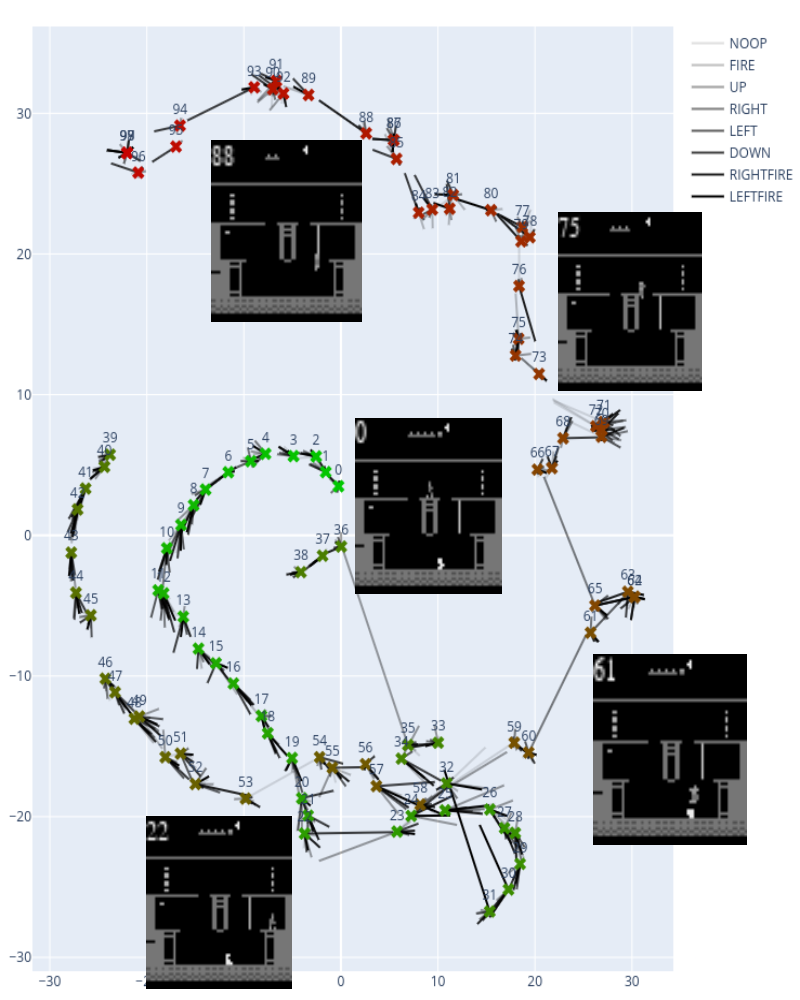}
    \caption{5 dimensional abstract representations visualized with t-SNE.}
    \label{fig:montezuma_abstr_tsne}
    \end{subfigure}
    \ \ 
    \begin{subfigure}[h]{0.45\linewidth}
    \includegraphics[width=\linewidth]{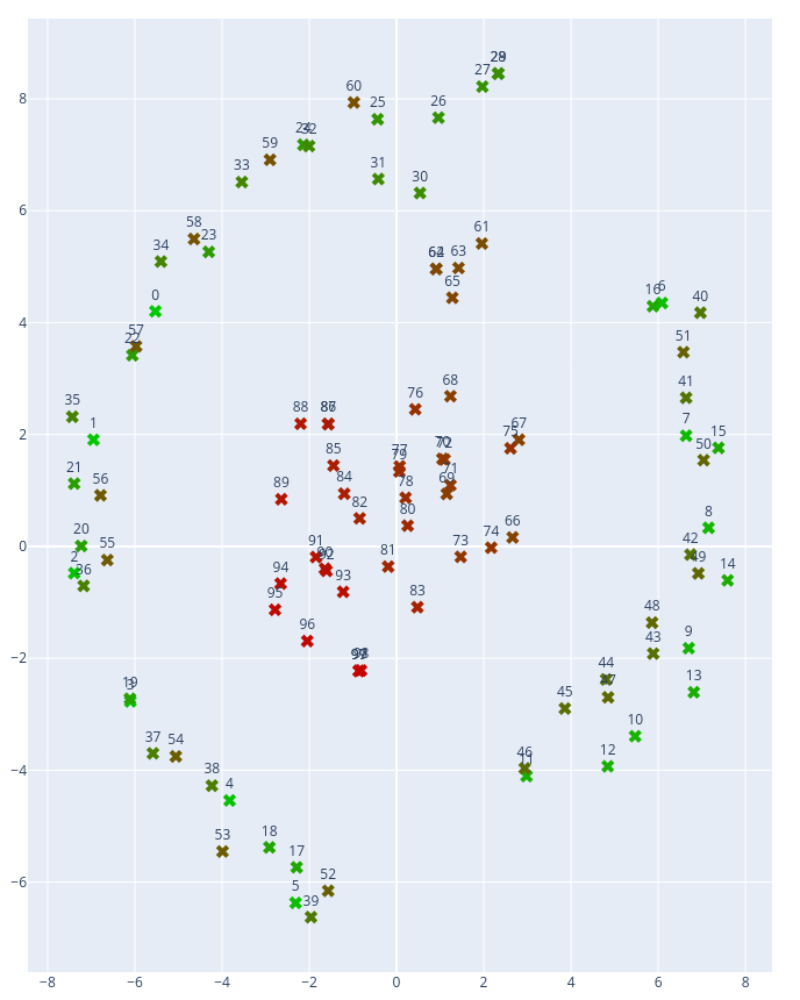}
    \caption{Original observations (each shaped 4 x 64 x 64) visualized with t-SNE.}
    \label{fig:montezuma_obs_tsne}
    \end{subfigure}
    \caption{a) Visualization for 100 observations (4 frames per observation) of Montezuma's Revenge game play. Representation learnt was $n_{\mathcal{X}} = 5$ and visualized with t-SNE \citep{laurens2008tsne} in 2 dimensions. Labels on top-left of game frames correspond to labels of states in lower-dimensional space. Transitions are shown by shaded lines. b) Original resized game frames visualized using t-SNE with the same parameters.}
\end{figure}
We show preliminary results for learning abstract representations for a more complex task, \textit{Montezuma's Revenge}. Comparing the two figures above, we observe how temporally closer states are closer together in lower-dimensional learnt representational space as compared to pixel space. Transitions are not shown for raw observations.
\newpage
\section{Labyrinth state count visualization}
\label{app:uniform-exploration}
\begin{figure}[ht!]
    \centering
    \captionsetup{width=\linewidth}

    \includegraphics[width=0.5\linewidth]{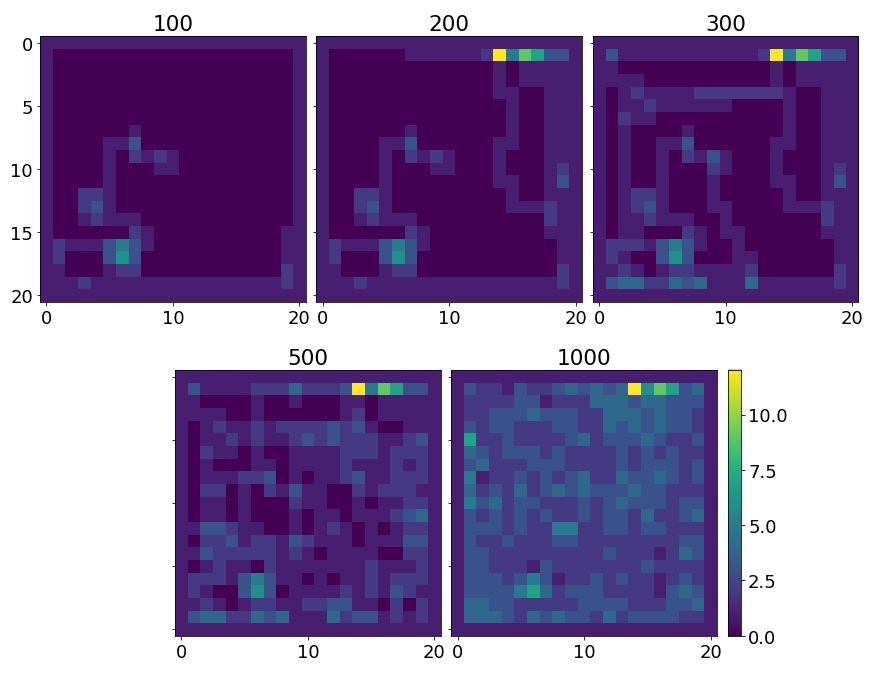}
    \caption{An example of the state counts of our agent in the open labyrinth with $d = 5$ step planning. Titles of each subplot denotes the number of steps taken. The brightness of the points are proportional to the state visitation count. The bright spots that begins after 200 counts is the agent requiring a few trials for learning the dynamics of labyrinth walls.}
\end{figure}

\newpage

\section{Gridworld visualizations}
\label{appendix:gridworld}

\begin{figure}[ht!]
    \centering
    \captionsetup{width=\linewidth}

    \begin{subfigure}[h]{0.25\linewidth}
    \includegraphics[width=\linewidth]{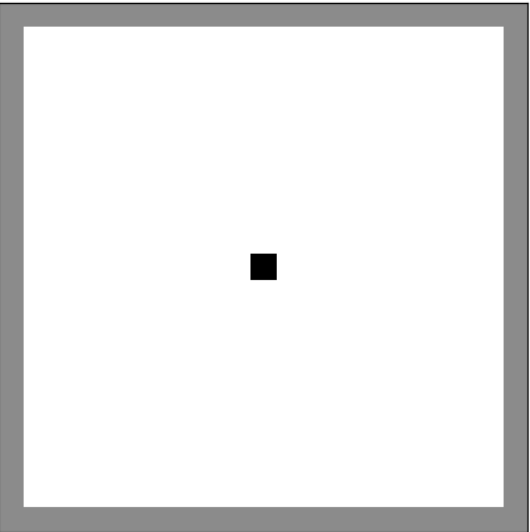}
    \caption{}
    \label{fig:empty_gridworld}
    \end{subfigure}
    \ \
    \begin{subfigure}[h]{0.25\linewidth}
    \includegraphics[width=\linewidth]{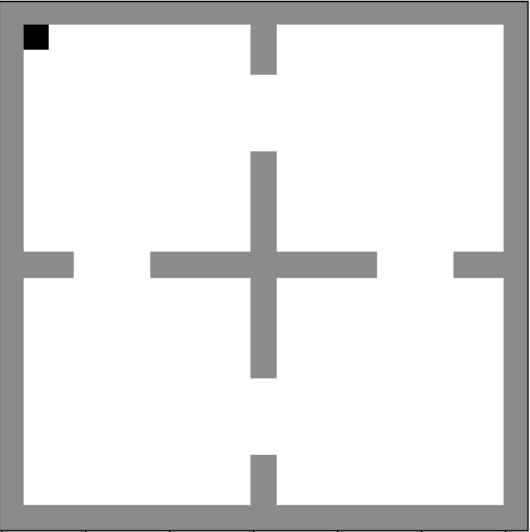}
    \caption{}
    \label{fig:walled_gridworld}
    \end{subfigure}
    \caption{\textit{Left}: Open labyrinth - A $21 \times 21$ empty labyrinth environment. \textit{Right}: 4-room labyrinth - A $21 \times 21$ 4-room labyrinth environment inspired by the 4-room domain in \cite{sutton1999mdp}.}
    \label{fig:gridworld_examples}
\end{figure}

\section{Experimental setup and hyperparameters}
\label{app:hyperparams}
For all of our experiments, we use a batch size of 64 and take 64 random steps transitions before beginning training. We also use the same discount factor for all experiments ($\gamma = 0.8$) and the same freeze interval for target parameters 1000. The reason behind our low discount factor is due to the high density of non-stationary intrinsic rewards in our state. We also use a replay buffer size corresponding to the maximum number of steps in each environment for all experiments. For all model-based abstract representation training, the following hyperparameters were all kept constant: minimum distance between consecutive states $\omega = 0.5$, slack ratio $\delta = 12$ and transition model dropout of $0.1$. For all experiments run with our novelty metric, we use $k = 5$ for our k-NN calculations. For all experiments that allows for forward planning and not explicitly mention depth $d$, we set planning depth $d = 5$. For abstract representation size ($n_{\mathcal{X}}$), we use a dimensionality of 2 for both labyrinth exploration tasks, a dimensionality of 4 for Acrobot, and finally a dimensionality of 3 for the multi-step maze.

\subsection{Neural Network Architectures}
For reference, 'Dense' implies a full-connected layer. 'Conv2D' refers to a 2D convolutional layer with stride 1. 'MaxPooling2D' refers to a max pooling operation. All networks were trained with the RMSProp optimizer. Throughout all experiments, we use the following neural network architectures:
\subsubsection{Encoder}
For all our non-control task inputs, we flatten our input and use the following feed-forward neural network architecture for $\hat{e}$:
\begin{itemize}
    \item Dense(200, activation='tanh')
    \item Dense(100, activation='tanh')
    \item Dense(50, activation='tanh')
    \item Dense(10, activation='tanh')
    \item Dense(abstract representation dimension).
\end{itemize}

For our control task, we use a convolution-based encoder:
\begin{itemize}
    \item Conv2D(channels=8, kernel=(3,3), activation='tanh')
    \item Conv2D(channels=16, kernel=(3,3), activation='tanh')
    \item MaxPool2D(pool size=(4,4))
    \item Conv2D(channels=32, kernel=(3,3), activation='tanh')
    \item MaxPool2D(pool size=(3,3))
    \item Dense(abstract state representation dimension).
\end{itemize}

\subsubsection{Transition model}
The input to our transition model is a concatenation of an abstract representation and an action. We use the following architecture
\begin{itemize}
    \item Dense(10, activation='tanh', dropout=0.1)
    \item Dense(30, activation='tanh', dropout=0.1)
    \item Dense(30, activation='tanh', dropout=0.1)
    \item Dense(10, activation='tanh', dropout=0.1)
    \item Dense(abstract representation dimension)
\end{itemize}
and add the output of this to the input abstract representation.

\subsubsection{Reward and discount factor models}
For both reward and discount factor estimators, we use the following architecture:
\begin{itemize}
    \item Dense(10, activation='tanh')
    \item Dense(50, activation='tanh')
    \item Dense(20, activation='tanh')
    \item Dense(1).
\end{itemize}

\subsubsection{Q function approximator}
We use two different architecture based on the type of input. If we use the concatenation of abstract representation and action, we use the following architecture:
\begin{itemize}
    \item Dense(20, activation='relu')
    \item Dense(50, activation='relu')
    \item Dense(20, activation='relu')
    \item Dense($n_{actions}$)
\end{itemize}
For the pixel frame inputs for our control environments, we use:
\begin{itemize}
    \item Conv2D(channels=8, kernel=(3,3), activation='tanh')
    \item Conv2D(channels=16, kernel=(3,3), activation='tanh')
    \item MaxPool2D(pool size=(4,4))
    \item Conv2D(channels=32, kernel=(3,3), activation='tanh')
    \item MaxPool2D(pool size=(3,3))
    \item Dense($n_{actions}$).
\end{itemize}

Finally, for our (purely model free) gridworld environments we use:
\begin{itemize}
    \item Dense(500, activation='tanh')
    \item Dense(200, activation='tanh')
    \item Dense(50, activation='tanh')
    \item Dense(10, activation='tanh')
    \item Dense($n_{actions}$)
\end{itemize}

As for our Bootstrap DQN implementation, we use the same architecture as above, except we replace the final Dense layer with $10$ separate heads (each a Dense layer with $n_{actions}$ nodes).

\subsection{Labyrinth environments}
Both environments used the same hyperparameters except for two: we add an $\epsilon$-greedy ($\epsilon = 0.2$) policy for the 4-room maze, and increased $n_{freq}$ from 1 to 3 in the 4-room case due to unnecessary over-training. We have the following hyperparameters for our two labyrinth environments:
\begin{itemize}
    \item $n_{iters} = 30000$ 
    \item $\alpha = 0.00025$
\end{itemize}

\subsection{Control environment}

In our Acrobot environment, the input to our agent is 4 stacked consecutive pixel frames, where we reduce each frame down to a $32 \times 32$ pixel frame. Our abstract representation dimension is $4$. We use a learning rate of $\alpha = 0.00025$ for all experiments. We train for $n_{iters} = 50000$ for all experiments with the exception of RND and transition loss - this discrepancy is due to time constraints for the latter two experiments which used $n_{iters} = 3000$, as both these experiments used prohibitively more time to run due to the increased number of steps used to reach the goal state of the environment. 

\subsection{Multi-step maze environment}
In our multistep maze environment, the input to our agent is a single $15 \times 15$ frame of an overview of the environment. Our abstract representation dimension is $3$. We use an $\epsilon$-greedy ($\epsilon = 0.1$) policy for this environment. We use $\alpha = 0.00025, n_{iters} = 30000$ for our model-free algorithms and $\alpha = 0.000025, n_{iters} = 50000$ for experiments that include a model-based component. 
Each episode is at most $4000$ environment steps.

\section{Potential improvements and future work}
\label{appendix:improvements}
\subsection{Incorporating agent history for better generalization}
As mentioned in Section~\ref{sec:4room}, generalization across states while only conditioning on \textit{primary features} ($X, A$ in our case) restricts the generalization ability of our agent. An interesting potential for future work would be to somehow incorporate this information into the learnt representation (potentially by using the same IB method, but using a full history of states as the conditioning variable).

\subsection{Increasing efficiency of learning the abstract state representations}
Currently, learning our low-dimensional state representation takes many iterations per timestep, and is also sensitive to hyperparameter tuning. Our method requires an accurate state representation and dynamics model according to our losses for our method to be effective - the sample-efficiency from our model-learning methods comes at a cost of more time and compute. This is due to the careful balance our model needs to maintain between its losses for good representations. Another interesting potential for future work would be to find ways to incorporate our model-learning losses using less time and computation.

\subsection{Extension to stochastic environments}
One avenue of future work would be to extend this work for stochastic environments. While there has been recent work on using expectation models for planning \citep{wan2020planning} that we could use to extend our algorithm, this still comes with its own limitations. 

\subsection{Empirical validation for representation size}
Another avenue of investigation is to find a more principled approach to finding the right representation size for a given environment. While we currently simply pick the lowest representation size from prior knowledge about an environment, it may be worthwhile to somehow allow the algorithm to decide this.

\end{document}